\documentclass[12pt, final]{l4dc2022}

\newtheorem{assumption}{Assumption}


\title[Gaussian Processes and Variational Integrators]{Structure-Preserving Learning Using Gaussian Processes and Variational Integrators}

\usepackage{times}
\usepackage[printonlyused,withpage]{acronym}
\usepackage{tikz}
\usetikzlibrary{calc}

\newcommand{\bs}[1]{\boldsymbol{#1}}
\newcommand{\hT}{^\text{T}}




\author{
 \Name{Jan Br\"udigam} \Email{jan.bruedigam@tum.de}
 \AND
 \Name{Martin Schuck} \Email{martin.schuck@tum.de}
 \AND
 \Name{Alexandre Capone} \Email{alexandre.capone@tum.de}
 \AND
 \Name{Stefan Sosnowski} \Email{sosnowski@tum.de}
 \AND
 \Name{Sandra Hirche} \Email{hirche@tum.de}\\
 \addr Chair of Information-oriented Control\\
 Technical University of Munich, Germany
}

\begin{document}

\maketitle

\begin{abstract}
    Gaussian process regression is increasingly applied for learning unknown dynamical systems. In particular, the implicit quantification of the uncertainty of the learned model makes it a promising approach for safety-critical applications. When using Gaussian process regression to learn unknown systems, a commonly considered approach consists of learning the residual dynamics after applying some generic discretization technique, which might however disregard properties of the underlying physical system. Variational integrators are a less common yet promising approach to discretization, as they retain physical properties of the underlying system, such as energy conservation and satisfaction of explicit kinematic constraints. In this work, we present a novel structure-preserving learning-based modelling approach that combines a variational integrator for the nominal dynamics of a mechanical system and learning residual dynamics with Gaussian process regression. We extend our approach to systems with known kinematic constraints and provide formal bounds on the prediction uncertainty. The simulative evaluation of the proposed method shows desirable energy conservation properties in accordance with general theoretical results and demonstrates exact constraint satisfaction for constrained dynamical systems.
\end{abstract}

\begin{keywords}
Gaussian process, regression, variational integrator, dynamical system, constraint
\end{keywords}

\section{Introduction}

Control of complex mechanical systems such as robots typically requires good knowledge of the system dynamics to achieve high control performance. In this regard, learning-based approaches are promising, as they can accurately model additional influences on top of the nominal dynamics, such as friction or external forces. One promising learning-based method for such scenarios is Gaussian process regression, which yields good generalization from only few training samples \citep{rasmussen_gaussian_2006}. In addition, uncertainty bounds for the learned dynamics can be given \citep{lederer2019uniform,srinivas_information_2012}, rendering the regression useful for safety-critical control applications, as shown in, for example, \cite{umlauft_learning_2017} and \cite{capone2019backstepping}. 

While much effort has been devoted to the theoretical properties of GP regression, the appropriate discretization of continuous-time mechanical systems in combination with GPs for simulation and control has found little attention. \cite{marsden_discrete_2001} proposed variational (symplectic) integrators as generally suitable discretization methods for mechanical systems, as they retain certain structural properties, such as energy or momentum conservation, and constraint satisfaction. As a result, they achieve higher accuracy than generic integrators such as explicit Runge-Kutta methods, which allows for larger prediction time steps---an important feature in, for example, online control. However, no attempts have been made to exploit the advantageous properties of variational integrators in combination with Gaussian process regression.

The contribution of this paper is a novel approach for physical-structure-preserving and data-driven modelling of dynamical systems. To this end, we combine a variational integrator of the nominal model dynamics and Gaussian process regression to include initially unknown system dynamics. In this way, we retain physical properties of the known system dynamics and augment them by the learned residual dynamics. This concept is also extended to learning system dynamics with known explicit kinematic constraints, for which variational integrators are particularly well suited. In addition, formal uncertainty bounds for learning unconstrained and constrained dynamical systems in combination with variational integrators are provided. The energy conservation and constraint satisfaction properties of the proposed approach are illustrated in simulations. Furthermore, the performance of our method is numerically studied for several mechanical systems including such with kinematic constraints and in different coordinates.

The rest of the paper is structured as follows: in Sec. \ref{sec:background} key concepts of Gaussian process regression and variational integrators are summarized, and related work is listed. Section \ref{sec:main} includes a derivation of the integrators for unconstrained and constrained dynamical systems in combination with Gaussian process regression, which are then evaluated and discussed in Sec. \ref{sec:evaluation}. Finally, we present our conclusions in Sec. \ref{sec:conclusions}.

\section{Background}\label{sec:background}
We briefly revisit the relevant concepts of Gaussian process regression and describe a first-order variational integrator. A more thorough treatment of Gaussian processes and variational integrators can be found in \cite{rasmussen_gaussian_2006} and \cite{marsden_discrete_2001}, respectively. The section is concluded with a review of selected related literature.

\subsection{Gaussian Process Regression}
A Gaussian process (GP) is employed to model an unknown function $f(\cdot)$. A GP is a collection of random variables, of which any finite subset is jointly normally distributed. As such, GPs generalize the notion of Gaussian distributions to infinite-dimensional vectors. A GP, denoted $f(\cdot) \sim \mathcal{GP}(m(\cdot), k(\cdot,\cdot))$, is specified by its mean function $m(\cdot)$ and covariance function $k(\cdot,\cdot)$. 

By conditioning a GP on measurement data pairs $(\bs{z}_i,f(\bs{z}_i))_{i=1...N} =: (\bs{Z},\bs{y})$, with states $\bs{Z} = (\bs{z}_i)_{i=1...N}$ and measurements $\bs{y} = (f(\bs{z}_i))_{i=1...N}$,  an unknown function $f(\cdot)$ can be modeled. The resulting posterior mean and variance are then given by 
\begin{subequations}
	\begin{align}
		\label{eq:gpmean}
		\mu(\bs{z}) &= \bar{m}(\bs{z}) +  \bs{k}(\bs{z},\bs{Z})\bs{K}(\bs{Z},\bs{Z})^{-1}\left(\bs{y}-\bar{\bs{m}}(\bs{Z})\right),\\
		\label{eq:gpvar}
		\sigma^2(\bs{z}) &= k(\bs{z},\bs{z}) - \bs{k}(\bs{z},\bs{Z})\bs{K}(\bs{Z},\bs{Z})^{-1}\bs{k}(\bs{z},\bs{Z})\hT,
	\end{align}
\end{subequations}
where $\bar{m}(\cdot)$ is the prior mean function, $\bs{k}(\bs{z},\bs{Z})=[k(\bs{z},\bs{z}_1) ~\cdots~ k(\bs{z},\bs{z}_N)]$, and the entries $k_{ij}$ of the posterior covariance matrix $\bs{K}$ are given by $k_{ij}(\bs{Z},\bs{Z}) = k(\bs{z}_i,\bs{z}_j)$.

While there exist different methods for predicting multidimensional function values, in this paper we model each entry of a vector-valued function with a separate GP. This is a standard procedure in GP-based literature \citep{deisenroth2013gaussian}. Therefore, in the following, a vector-valued GP $\bs{f}(\cdot) \sim \mathcal{GP}(\bs{m}(\cdot), \bs{k}(\cdot, \cdot))$ corresponds to a vector of independent GPs.

\subsection{Variational Integrators}
Solving differential equations, for example the equations of motion of a dynamical system, requires numerical integration. Variational integrators, as described in \cite{marsden_discrete_2001}, are symplectic (``shape-preserving'') integrators for mechanical systems that are able to preserve certain physical properties of the underlying system, such as energy conservation or satisfaction of explicit constraints without drift. 

Given a continuous-time mechanical system with Lagrangian $\mathcal{L}(\bs{x}(t),\bs{v}(t))$, where $\bs{x}\in\mathbb{R}^n$ and $\bs{v}\in\mathbb{R}^n$ are the generalized position and velocity of the system, respectively, as well as kinematic constraints $\bs{g}(\bs{x}(t))=\bs{0}\in\mathbb{R}^c$, the action integral $S$ of the system can be written as
\begin{equation}\label{eq:action}
    S = \int^{t_N}_{t_0}  \mathcal{L}\left(\bs{x}(t),\bs{v}(t)\right)~\mathrm{d}t + \int^{t_N}_{t_0}\bs{\lambda}(t)\hT\bs{g}(\bs{x}(t))~\mathrm{d}t,
\end{equation}
where $\bs{\lambda}\in\mathbb{R}^c$ is a Lagrange multiplier (constraint force) enforcing the constraints $\bs{g}=\bs{0}$.

According to the principle of least action, minimizing \eqref{eq:action} by varying the trajectory $\bs{x}(t)$ yields the continuous-time differential equations of the system that could then be discretized. However, to obtain a variational integrator instead, \eqref{eq:action} is discretized directly. For clarity, we will derive and use a first-order integrator throughout this paper, but higher-order discretizations are also possible.

A first-order variational integrator is obtained by discretizing \eqref{eq:action} over three time steps:
\begin{equation}\label{eq:discrete_action}
	S_\mathrm{d} = \sum^{1}_{k=0} \left(\mathcal{L}(\bs{x}_k,\bs{v}_k) + \bs{\lambda}_k\hT\bs{g}(\bs{x}_k)\right)\Delta t,
\end{equation}
with time step $\Delta t$ and where
\begin{equation}\label{eq:discrete_vel}
    \bs{v}_k = \frac{\bs{x}_{k+1}-\bs{x}_k}{\Delta t}.
\end{equation}

Least action for fixed start and end points $\bs{x}_0$ and $\bs{x}_2$, i.e., minimizing \eqref{eq:discrete_action} with respect to the center point $\bs{x}_1$, yields the implicit discretized equations of motion
\begin{equation}\label{eq:dynamics}
	\nabla_{\bs{x}_1} S_\mathrm{d} =: -\bs{d} = \bs{0}.
\end{equation} 
The resulting implicit nonlinear equations of motion (variational integrator) take the form
\begin{subequations}\label{eq:var_int}
	\begin{align}
		\bs{x}_{k+1} &= \bs{x}_k + \bs{v}_k\Delta t,\label{eq:dynamics_update}\\
		\bs{d}(\bs{v}_{k+1},\bs{\lambda}_{k+1}) &= \bs{d}_0(\bs{v}_{k+1}) - \bs{G}(\bs{x}_{k+1})\hT\bs{\lambda}_{k+1} = \bs{0},\label{eq:dynamics_implicit}\\
		\bs{g}(\bs{x}_{k+2}) &= \bs{g}(\bs{x}_{k+1},\bs{v}_{k+1}) = \bs{0},\label{eq:dynamics_constraint}
	\end{align}
\end{subequations}
where $\bs{d}_0$ are the unconstrained dynamics, and the constraint Jacobian $\bs{G}(\bs{x}) = \frac{\partial \bs{g}(\bs{x})}{\partial \bs{x}}$ maps constraint forces into the dynamics. Note that the constraints are enforced for $\bs{x}_{k+2}$.

Besides the typical parameterization of mechanical systems in minimal coordinates (joint coordinates), so-called maximal coordinates can be used as well. In this case, each body of a system is described with its six degrees of freedom and kinematic constraints represent joints connecting the bodies. The general idea for the derivation of the variational integrator remains the same, and details including the treatment of quaternion-based orientation representations are given in \cite{brudigam_linear-time_2020}. Our method is also applicable to maximal-coordinate descriptions which have demonstrated numerical and control-theoretical advantages, as described in \cite{brudigam_linear-quadratic_2021} and \cite{brudigam_linear-time_2021-1}.

\subsection{Related Work}
Theoretical results for variational integrators are presented in \cite{marsden_discrete_2001}, and numerous extensions can be found in subsequent works, for example in \cite{junge_discrete_2005} and \cite{wenger_construction_2017}. Additionally, there exist efficient algorithms for variational integrators in minimal coordinates \citep{lee_linear-time_2018,fan_efficient_2019} and maximal coordinates \citep{brudigam_linear-time_2020}. With respect to learning-based approaches, variational integrators have found increasing attention in the works of \cite{Saemundsson_variational_2020}, \cite{desai_variational_2021}, and \cite{zhong_unsupervised_2020} in combination with neural networks, but not with Gaussian process regression.

Structured learning of system dynamics with Gaussian process regression is gaining increasing attention. A common approach is to include as much information as possible about the system in the prior mean, making the residual smaller and therefore easier to learn, for example as shown in \cite{koller2018learning} and \cite{capone2019backstepping}. Alternatively, several methods exist for constructing kernel functions of a Gaussian process such that they generate predictions obeying certain physical constraints or properties. This principle is employed in \cite{cheng_learn_2016} to generate kernels spanning a subspace that captures the Lagrangian's projection as inverse dynamics. In a similar spirit, \cite{umlauft2019feedback} propose a composite kernel that captures the control-affine structure of systems where feedback linearization can be applied. However, these approaches make use of standard discretization techniques, and there exist no approaches utilizing the advantages of variational integrators. An alternative line of research considers the inclusion of constraints in GPs. \cite{swiler_survey_2020} provide a survey of approaches for additionally treating constraints with Gaussian process regression. Amongst others, boundary constraints can be satisfied with so-called warping, where values are transformed to and from the constrained set with a monotone warping function, as described in \cite{snelson_warped_2004}. Constraints in the form of (partial) differential equations can be incorporated in Gaussian process regression as well, for example as shown in \cite{raissi_machine_2017} and \cite{owhadi_bayesian_2015}. \cite{geist_learning_2020} incorporate knowledge of the constraints of a continuous-time system to linearly transform a Gaussian process modeling the unconstrained accelerations, such that the accelerations adhere to constraints linear in the accelerations. A disadvantage of these methods is that they require the system at hand to satisfy strict requirements or are formulated in continuous time which might not apply in practice. In contrast, our method is formulated directly in discrete time and can handle nonlinear kinematic constraints. \cite{ensinger_structure_2021} investigate structure-preserving integration for Gaussian process regression, but they do not directly integrate the GP into the integrator.

\section{Combining Variational Integrators and Gaussian Process Regression}\label{sec:main}
Our goal is to predict the next state $\bs{z}_{k+1} = [\bs{x}_{k+1}\hT ~~ \bs{v}_{k+1}\hT]\hT$ of a dynamical system given the current state $\bs{z}_{k} = [\bs{x}_{k}\hT ~~ \bs{v}_{k}\hT]\hT$, the nominal dynamics model \eqref{eq:var_int}, and training data $(\bs{Z},\bs{y}) = (\bs{z}_{i},\bs{v}_{i+1})_{i=1...N}$ obtained from trajectories with random initial configurations. In the following, we first present an approach for predicting the next state for unconstrained and constrained systems. Afterwards, formal uncertainty bounds are stated, which is nontrivial for constrained systems due to the projected Gaussian process prediction.

\subsection{One-Step Prediction}
The next position $\bs{x}_{k+1}$ is directly obtained from the variational integrator (cf. \eqref{eq:dynamics_update}) without requiring any dynamics information:
\begin{equation}
    \bs{x}_{k+1} = \bs{x}_k + \bs{v}_k\Delta t.
\end{equation}
Therefore, the GP is only required for computing the next velocity $\bs{v}_{k+1}$. Hence, we use a posterior GP mean to model the one-step predictive model, i.e.,
\begin{equation}
    \bs{v}_{k+1}(\bs{z}_k) := \mu(\bs{z}_k),
\end{equation}
where $\mu(\bs{z}_k)$ is computed as in \eqref{eq:gpmean}.

\subsubsection{Unconstrained Systems}
After training the GP with the training data, a posterior mean function for the unconstrained velocity $\bs{v}_{k+1}(\bs{z}_k)$ is obtained and the integrator for the unconstrained dynamics is constructed as
\begin{subequations}\label{eq:var_int_gp}
	\begin{align}
		\bs{x}_{k+1}(\bs{z}_k) &= \bs{x}_k + \bs{v}_{k}\Delta t,\label{eq:dynamics_update_gp}\\
        \bs{0} &= \bs{d}_0(\bar{\bs{v}}_{k+1}),\label{eq:dynamics_implicit_gp1}\\
		\bs{v}_{k+1}(\bs{z}_k) &= \bar{\bs{v}}_{k+1} + \bs{k}(\bs{z}_k,\bs{Z})K(\bs{Z},\bs{Z})^{-1}\left(\bs{y}-\bar{\bs{v}}_{k+1}(\bs{Z})\right),\label{eq:dynamics_implicit_gp2}
	\end{align}
\end{subequations}
where the prior mean $\bar{\bs{v}}_{k+1}$ is obtained by solving the (implicit) nominal dynamics \eqref{eq:dynamics_implicit_gp1} resulting from the variational integrator \eqref{eq:var_int}, for example with Newton's method.

With integrator \eqref{eq:var_int_gp}, unmodeled and potentially non-conservative dynamics, such as friction, can be described by the learned regression model, and the desirable properties of the variational integrator, such as energy conservation, are retained for the nominal and correctly learned dynamics. So, given an underlying Lagrangian system and assuming for all $\bs{z}_k$ the velocity $\bs{v}_{k+1}(\bs{z}_k)$ obtained from the integrator \eqref{eq:var_int_gp} matches the velocity obtained from the variational integrator \eqref{eq:var_int} directly derived for the underlying system, i.e., both produce the same next state, then the integrators \eqref{eq:var_int_gp} and \eqref{eq:var_int} are equivalent variational (symplectic) integrators with corresponding theoretical properties of variational integrators. Deviations of the prediction \eqref{eq:dynamics_implicit_gp2} from the system's true velocity no longer allow for a direct statement regarding the equivalence of the integrator \eqref{eq:var_int_gp} and variational integrator \eqref{eq:var_int}.
Nonetheless, we obtained satisfying results in practice, for example regarding conservation of energy, as demonstrated in Sec. \ref{sec:evaluation}.

\subsubsection{Constrained Systems}
Now, systems with kinematic constraints $\bs{g}(\bs{x})=\bs{0}$ are considered. We assume to know the constraints correctly in advance, i.e., the kinematic constraints need not be learned. Such constraints can be obtained from measurements or are prior knowledge.

The general approach for kinematically constrained systems remains the same as before, but now an additional projection of the predicted state onto the constraint manifold is required to satisfy the constraints. As before, a velocity $\bs{v}_{\mathrm{u},k+1}$ is calculated with \eqref{eq:dynamics_implicit_gp2} from the modeled dynamics and the Gaussian process regression. However, this velocity will not satisfy $\bs{g}(\bs{x}_{k+1},\bs{v}_{\mathrm{u},k+1}) = \bs{0}$ in general, since the constraints are not explicitly incorporated in the Gaussian process regression. Note that the regression model was obtained from data from the actual constrained system and therefore the regression prediction will still be close to the constraint-satisfying velocity.  

The constraint-fulfilling velocity $\bs{v}_{k+1}$ is calculated with a nonlinear constrained least-squares optimization, resulting in the following integrator for constrained dynamics:
\begin{subequations}\label{eq:var_int_gp_con}
	\begin{align}
		\bs{x}_{k+1}(\bs{z}_k) &= \bs{x}_k + \bs{v}_{k}\Delta t,\label{eq:dynamics_update_gp_con}\\
        \bs{0} &= \bs{d}_0(\bar{\bs{v}}_{k+1}) - \bs{G}(\bs{x}_{k+1})\hT\bs{\lambda}_{k+1},\label{eq:dynamics_implicit_gp_con1}\\
        \bs{0} &= \bs{g}(\bs{x}_{k+1},\bar{\bs{v}}_{k+1}),\label{eq:dynamics_constraint_gp_con}\\
		\bs{v}_{\mathrm{u},k+1}(\bs{z}_k) &= \bar{\bs{v}}_{k+1} + \bs{k}(\bs{z}_k,\bs{Z})K(\bs{Z},\bs{Z})^{-1}\left(\bs{y}-\bar{\bs{v}}_{k+1}(\bs{Z})\right),\label{eq:dynamics_implicit_gp_con2}\\
        \bs{v}_{k+1} &= ~\underset{\bs{v}_{k+1}}{\mathrm{argmin}} ~  \lVert\bs{v}_{k+1}-\bs{v}_{\mathrm{u},k+1}\rVert^2, ~~ \text{s.t.} ~~ \bs{g}(\bs{x}_{k+1},\bs{v}_{k+1}) = \bs{0}.\label{eq:dynamics_projection_gp_con}
	\end{align}
\end{subequations}
The integrator \eqref{eq:var_int_gp_con} consists of the variational integrator of the nominal model \eqref{eq:dynamics_update_gp_con} - \eqref{eq:dynamics_constraint_gp_con}, the prediction from the GP regression \eqref{eq:dynamics_implicit_gp_con2}, and the projection of the predicted velocity onto the constraint manifold \eqref{eq:dynamics_projection_gp_con}. Due to the projection, we recover exact constraint satisfaction also for the learned model. The optimization problem \eqref{eq:dynamics_projection_gp_con} can be solved with a variety of approaches, for example with numerical methods described in \cite{nocedal_numerical_2006}. 

As before, if the predicted velocity $\bs{v}_{\mathrm{u},k+1}$ matches the velocity obtained from the variational integrator \eqref{eq:var_int} for the underlying system, we obtain equivalent variational integrators. Otherwise, no direct statement regarding the equivalence can be made. Due to the projection \eqref{eq:dynamics_projection_gp_con}, technically the resulting velocity $\bs{v}_{k+1}$ no longer follows a jointly Gaussian distribution and therefore is not a GP anymore. However, we recover a probabilistic error bound for the output after projection in the next section and, following \cite{swiler_survey_2020}, we refer to the overall regression as GP regression.

\subsection{Prediction Error Bound}
One of the major advantages of GP regression is the availability of an error bound for the posterior mean, which comes in the form of the posterior variance multiplied by a scalar \citep{chowdhury2017kernelized,lederer2019uniform,capone2021gaussian,srinivas_information_2012}.

In general, the projection \eqref{eq:dynamics_projection_gp_con} is a nonlinear optimization problem which is not monotone and, hence, its inverse does not necessarily exist. Therefore, warping approaches maintaining a measure of the variance, such as those described by \cite{snelson_warped_2004}, cannot be applied to obtain uncertainty bounds on the velocity $\bs{v}_{k+1}$. However, a formal upper bound for the uncertainty can be given under the following assumption.
\begin{assumption}\label{assu:lipschitz}
    The projection \eqref{eq:dynamics_projection_gp_con} corresponds to a Lipschitz continuous function with Lipschitz constant $L$.
\end{assumption}
Assumption \ref{assu:lipschitz} is violated if a predicted velocity lies close to multiple possible constraint-satisfying velocities, in which case the argmin value is discontinuous. Note that the same issue would occur in the nominal variational integrator if the initial guess for solving the implicit dynamics is far off. That is to say, for reasonable predictions of the GP, Assumption \ref{assu:lipschitz} will be fulfilled, just as the nominal variational integrator finds the correct velocity for a reasonable initial guess. In practice the Lipschitz constant $L$ can be estimated by sampling the gradient of the function \eqref{eq:dynamics_projection_gp_con} at several points in the interval of interest.

\begin{theorem}\label{theo:bounds}
    Let Assumption \ref{assu:lipschitz} hold. For all $\bs{z}$, let $\bs{v}_{\mathrm{u}}(\bs{z}) \sim \mathcal{GP}(\bar{\bs{v}}_{\mathrm{u}}(\bs{z}), \sigma^2_{\mathrm{u}}(\bs{z}))$ denote a sample from a GP with posterior mean $\bar{\bs{v}}_{\mathrm{u}}(\bs{z})$ and variance $\sigma^2_{\mathrm{u}}(\bs{z})$, and consider a least-squares projection with nonlinear constraints $\bs{g}=\bs{0}$ of the form
    \begin{equation*}
        \bs{v}(\bs{v}_{\mathrm{u}}) = ~\underset{\bs{\nu}}{\mathrm{argmin}} ~  \lVert\bs{\nu}-\bs{v}_{\mathrm{u}}\rVert^2, ~~ \mathrm{s.t.} ~~ \bs{g}(\bs{\nu}) = \bs{0}.
    \end{equation*}
	Then for every $\delta \in (0,1)$, there exists a $\gamma>0$, such that
    \[\lVert \bs{v}(\bs{v}_{\mathrm{u}}) - \bs{v}(\bar{\bs{v}}_{\mathrm{u}})\rVert < \sum_{i=1}^{d_v}{\gamma}\sigma_{u,i}(\bs{z})\]
    holds for all states $\bs{z}$ with a probability at least $1-\delta$, where $d_v$ is the dimension of $\bs{v}_u$ and $\sigma_{u,i}(\cdot)$ is the posterior variance of the $i$-th entry of the Gaussian process used to model $\bs{v}_u$.
\end{theorem}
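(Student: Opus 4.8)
The plan is to combine the standard GP concentration inequality for the posterior mean with the Lipschitz property of the projection map. First I would invoke the known result (e.g., \cite{chowdhury2017kernelized,lederer2019uniform}) that, under the usual bounded-RKHS-norm assumption on the residual dynamics, for every $\delta\in(0,1)$ there is a scalar $\beta>0$ such that the unprojected GP sample obeys
\[
\lvert \bs{v}_{\mathrm{u}}(\bs{z}) - \bar{\bs{v}}_{\mathrm{u}}(\bs{z}) \rvert < \sqrt{\beta}\,\sigma_{\mathrm{u}}(\bs{z})
\]
simultaneously for all augmented states $\bs{z}$ with probability at least $1-\delta$. This is the only probabilistic ingredient; everything after it is deterministic and holds on the event where this bound is valid.

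Second, I would condition on that high-probability event and push the bound through the projection. Since the projection $\bs{v}(\cdot)$ is Lipschitz with constant $L$ by Assumption \ref{assu:lipschitz}, we have
\[
\lvert \bs{v}(\bs{v}_{\mathrm{u}}(\bs{z})) - \bs{v}(\bar{\bs{v}}_{\mathrm{u}}(\bs{z})) \rvert \le L\,\lvert \bs{v}_{\mathrm{u}}(\bs{z}) - \bar{\bs{v}}_{\mathrm{u}}(\bs{z}) \rvert < L\sqrt{\beta}\,\sigma_{\mathrm{u}}(\bs{z}).
\]
Setting $\gamma := L^2\beta$ then yields $\lvert \bs{v}(\bs{v}_{\mathrm{u}}) - \bs{v}(\bar{\bs{v}}_{\mathrm{u}})\rvert < \sqrt{\gamma}\,\sigma_{\mathrm{u}}(\bs{z})$ on the same event, i.e., with probability at least $1-\delta$, which is exactly the claimed statement. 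For the vector-valued case I would apply the argument componentwise (each component of $\bs{v}_{\mathrm{u}}$ is an independent GP), take a union bound over the finitely many output dimensions by rescaling $\delta$, and use that the Lipschitz constant controls the full vector norm; these are routine and I would not spell them out.

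The main obstacle — and the place where the proof is really only as strong as its hypotheses — is the legitimacy of Assumption \ref{assu:lipschitz} together with the standard concentration bound. The argmin in \eqref{eq:dynamics_projection_gp_con} is a solution of a nonconvex equality-constrained least-squares problem, so its dependence on $\bs{v}_{\mathrm{u}}$ is only piecewise smooth; Lipschitz continuity genuinely fails near points equidistant from several branches of the constraint manifold. The paper sidesteps this by simply assuming it away and arguing heuristically that reasonable GP predictions stay in a single smooth branch (just as the nominal implicit integrator needs a good initial guess). I would make this explicit in the proof by noting that the bound is asserted only on the event where the GP concentration holds, and that Assumption \ref{assu:lipschitz} is presumed to hold globally; a more careful treatment would instead invoke a local Lipschitz constant valid on a neighbourhood of the constraint manifold and argue that the concentration event places $\bs{v}_{\mathrm{u}}(\bs{z})$ in that neighbourhood, but I would flag that as beyond the scope of the present statement.
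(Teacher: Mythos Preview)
Your proposal is correct and follows essentially the same route as the paper: invoke the uniform GP concentration bound from \cite{lederer2019uniform} to get $\lvert \bs{v}_{\mathrm{u}} - \bar{\bs{v}}_{\mathrm{u}}\rvert \le \sqrt{\beta}\,\sigma_{\mathrm{u}}$, push it through the Lipschitz projection, and set $\gamma=L^2\beta$. The paper's proof is terser (it does not spell out the union-bound remark or your caveats about Assumption~\ref{assu:lipschitz}), but the argument is identical.
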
 
\begin{proof}
    Due to the Lipschitz continuity of the least-squares projection $\bs{v}(\cdot)$, we have $\lVert \bs{v}(\bs{v}_{\mathrm{u}}) - \bs{v}(\bar{\bs{v}}_{\mathrm{u}})\rVert \leq L\lVert \bs{v}_{\mathrm{u}} - \bar{\bs{v}}_{\mathrm{u}}\rVert$. By applying the union bound to \cite{lederer2019uniform}, we obtain that, for every $\delta\in(0,1)$, there exist ${\beta}_1,\ldots,\beta_{d_v}>0$, such that 
    \[\lVert \bs{v}_u(\bs{z}) - \bar{\bs{v}}_u(\bs{z})\rVert \leq \sum_{i=1}^{d_v} \lvert \bs{v}_{u,i}(\bs{z}) - \bar{\bs{v}}_{u,i}(\bs{z})\rvert \leq \sum_{i=1}^{d_v} \sqrt{{\beta}_i}\sigma_{u,i}(\bs{z})\]
    holds for all states $\bs{z}$ with probability at least $1-\delta$. The result then follows by setting $\gamma=L\max_i\sqrt{\beta_i}$.
\end{proof}
Theorem \ref{theo:bounds} allows us to obtain an error bound for the prediction error that holds with high probability. In addition, as the position $\bs{x}_{k+1}$ is a linear function of the velocity, the bound can be applied straightforwardly for the entire state prediction.

\section{Evaluation and Discussion} \label{sec:evaluation}
We test our methods in several simulations to illustrate the theoretical results and evaluate the integrator's performance. Besides demonstrating the structure-preserving properties, we also investigate errors for multi-step predictions that are relevant in, for example, model-predictive control.   

For each system, training samples are drawn from 200 simulated trajectories starting with uniformly distributed random initial configurations and zero velocity. Each trajectory runs for 2 seconds with a time step of 100$\mu$s. Test samples are drawn from another 100 simulated trajectories with the same setup.
Each training sample consists of the full state input $\boldsymbol{z}_k$ and the velocity targets $\boldsymbol{v}_{k+1}$. We use an automatic relevance determination squared-exponential kernel of suitable dimensions for all experiments. Optimal hyperparameters are determined from 100 runs of maximizing the marginal likelihood with varying initial guesses for the hyperparameters.

\subsection{Energy conservation}
The desirable structure-preserving properties of the combination of variational integrators and Gaussian process regressions are demonstrated on a single and double pendulum with link masses $m=1$kg and lengths $l=1$m in maximal coordinates. For the simulations, we train a Gaussian process on a single recorded trajectory of two seconds starting at $\theta=\frac{\pi}{2}$ and $(\theta_1,\theta_2)=(\frac{\pi}{2},0)$, respectively.
from which the training samples are drawn. The resulting integrator is constructed without any prior dynamics knowledge according to \eqref{eq:var_int_gp_con}. For the single pendulum, the total energy error of the integrator \eqref{eq:var_int_gp_con} is compared to that of an explicit Euler integration of the real conservative system dynamics on the trained trajectory. Both integrators have a time step of $\Delta t=10$ms. The double pendulum is used for comparing the norm of the constraint drift for the explicit Euler method, and the integrator \eqref{eq:var_int_gp_con} with projection onto the constraints. The results are displayed in Fig. \ref{fig:energy}.

\begin{figure}
    \begin{tikzpicture}
        \node[inner sep=0pt] (russell) at (0,0)
            {\includegraphics[width=0.49\textwidth]{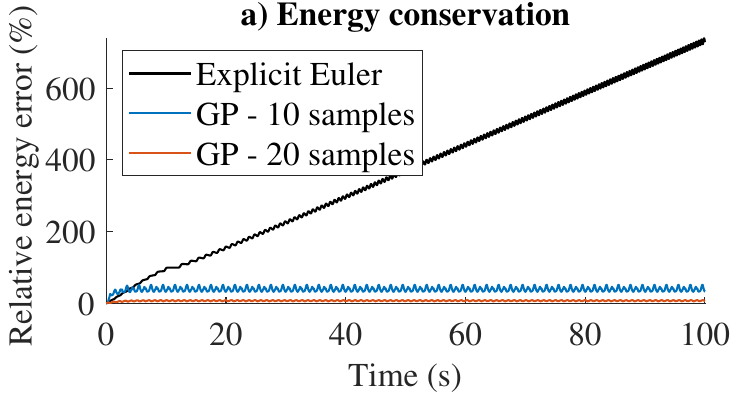}};
        \node[inner sep=0pt] (whitehead) at (7.6,0)
            {\includegraphics[width=.49\textwidth]{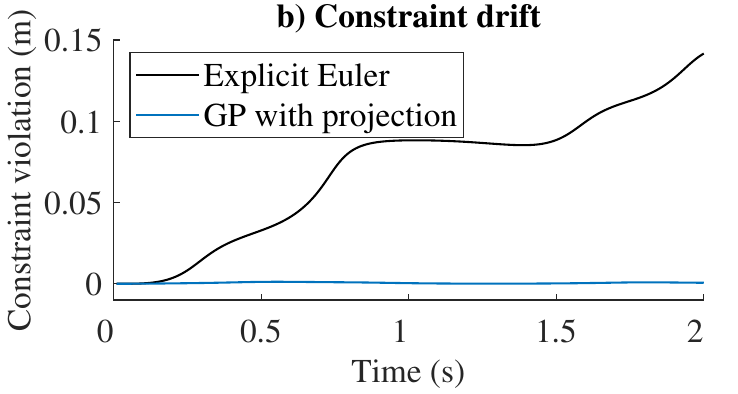}};
    \end{tikzpicture}
    \caption{Structure-preserving properties of the integration method even without prior dynamics model. a) Energy error for an explicit Euler integrator (black), integrator \eqref{eq:var_int_gp_con} with 10 training samples (blue), and integrator \eqref{eq:var_int_gp_con} with 20 training samples (red). b) Constraint drift for an explicit Euler integrator (black), and integrator \eqref{eq:var_int_gp_con} with projection (blue).}
    \label{fig:energy}
\end{figure}

The energy error for the explicit Euler integration in Fig. \ref{fig:energy} a) increases over time as is typical for explicit Runge-Kutta methods. In contrast, the learned dynamics models conserve the energy of the system even for longer time periods. It is observed that for fewer training samples, a certain amount of energy error is reached before conservation occurs. One possible explanation is that a trained area of the state space has to be reached for energy conservation. The small oscillations in the energy error are typical for variational integrators (see \cite{marsden_discrete_2001}). In Fig. \ref{fig:energy} b), the constraint drift of the explicit Euler method for the double pendulum becomes clearly visible, which occurs because constraints are formulated on an acceleration level, and not a position level. In contrast, no drift occurs for the trained integrator \eqref{eq:var_int_gp_con}.

\subsection{Prediction Performance}
\begin{figure}[htp!]
    \begin{tikzpicture}
        \coordinate (Lw1) at (-2.0,5.0);
		\draw[fill=black!5,rotate=100] ($(Lw1)+(0,0.0)$) rectangle ++(2,-0.2);
		\draw[fill=black] ($(Lw1)+(-0.25,2.0)$) circle (0.08);
		\node[font=\bfseries] at ($(Lw1)+(0.0,2.8)$) {Pendulum};
        \node[font=\bfseries] at ($(Lw1)+(0.8,2.0)$) {$c \in [0,1]$};

        \coordinate (Lw2) at (1.5,5.0);
        \draw[fill=black,rotate=0] ($(Lw2)+(-1.5,-1.55)$) rectangle ++(3,-0.1);
		\draw[fill=black!5,rotate=100] ($(Lw2)+(-1.6,0.3)$) rectangle ++(2,-0.2);
        \draw[fill=black!5,rotate=0] ($(Lw2)+(-1.0,-1.5)$) rectangle ++(2,-0.2);
		\draw[fill=black] ($(Lw2)+(0.06,-1.5)$) circle (0.08);
		\node[font=\bfseries] at ($(Lw2)+(0.0,2.8)$) {Cartpole};
        \node[font=\bfseries] at ($(Lw2)+(1.0,-1.1)$) {$c_2 \in [0,1]$};
        \node[font=\bfseries] at ($(Lw2)+(-2.0,-1.2)$) {$c_1 \in [0,0.5]$};

        \coordinate (Lw3) at (4.5,5.0);
		\draw[fill=black!5,rotate=100] ($(Lw3)+(0,0.0)$) rectangle ++(2,-0.2);
		\draw[fill=black!5,rotate=120] ($(Lw3)+(-2,0)$) rectangle ++(2,-0.2);
		\draw[fill=black] ($(Lw3)+(-0.25,2.0)$) circle (0.08);
		\draw[fill=black] ($(Lw3)+(0.11,0.05)$) circle (0.08);
		\node[font=\bfseries] at ($(Lw3)+(0.0,2.8)$) {Double Pendulum};
        \node[font=\bfseries] at ($(Lw3)+(0.8,2.0)$) {$c_1 \in [0,2]$};
        \node[font=\bfseries] at ($(Lw3)+(1.2,0.4)$) {$c_2 \in [0,0.5]$};

        \coordinate (Lw4) at (8.0,5.0);
		\draw[fill=black!5,rotate=100] ($(Lw4)+(0,0.0)$) rectangle ++(2,-0.2);
		\draw[fill=black!5,rotate=120] ($(Lw4)+(-2,0)$) rectangle ++(2,-0.2);
        \draw[fill=black!5,rotate=120] ($(Lw4)+(-0.15,-0.7)$) rectangle ++(2,-0.2);
		\draw[fill=black!5,rotate=100] ($(Lw4)+(-1.85,-0.7)$) rectangle ++(2,-0.2);
		\draw[fill=black] ($(Lw4)+(-0.25,2.0)$) circle (0.08);
		\draw[fill=black] ($(Lw4)+(0.11,0.05)$) circle (0.08);
        \draw[fill=black] ($(Lw4)+(0.75,0.27)$) circle (0.08);
        \draw[fill=black] ($(Lw4)+(1.1,-1.7)$) circle (0.08);
		\node[font=\bfseries] at ($(Lw4)+(0.3,2.8)$) {Fourbar Segment};
        \node[font=\bfseries] at ($(Lw4)+(0.8,2.0)$) {$c_1 \in [0,2]$};
        \node[font=\bfseries] at ($(Lw4)+(1.1,0.6)$) {$c_2=c_3 \in [0,0.5]$};
        \node[font=\bfseries] at ($(Lw4)+(0.1,-1.5)$) {$c_4 = 0$};

        \node[inner sep=0pt] (russell) at (0,0)
            {\includegraphics[width=.49\textwidth]{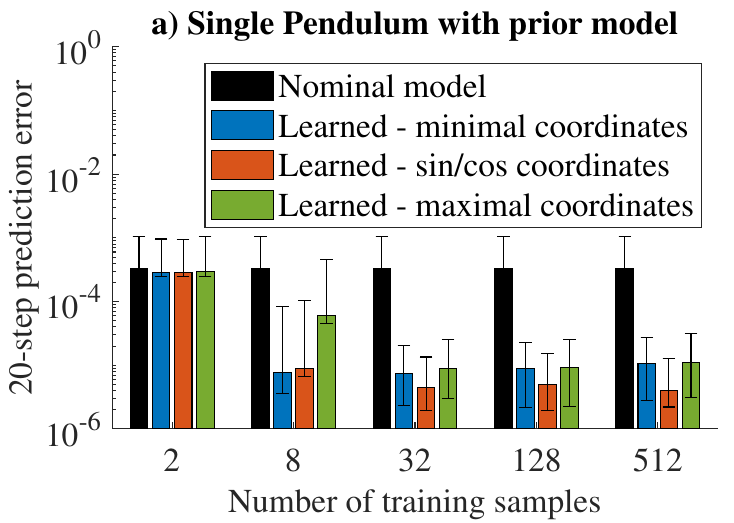}};
        \node[inner sep=0pt] (whitehead) at (7.6,0)
            {\includegraphics[width=.49\textwidth]{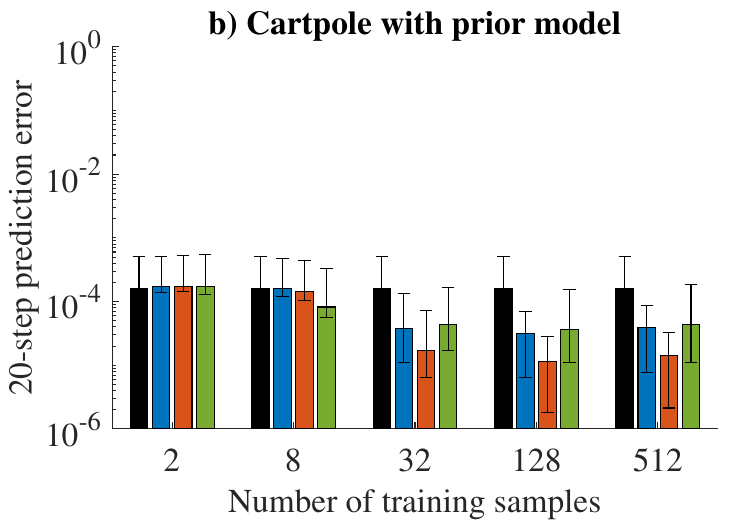}};
        \node[inner sep=0pt] (russell) at (0,-5.7)
            {\includegraphics[width=.49\textwidth]{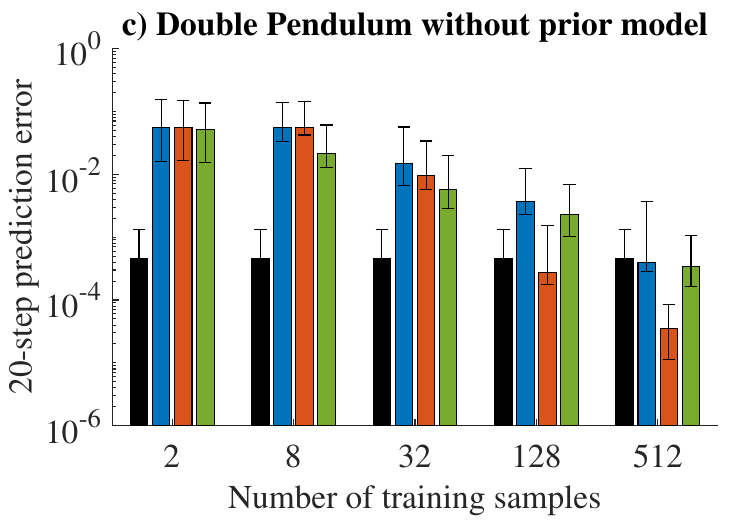}};
        \node[inner sep=0pt] (whitehead) at (7.6,-5.7)
            {\includegraphics[width=.49\textwidth]{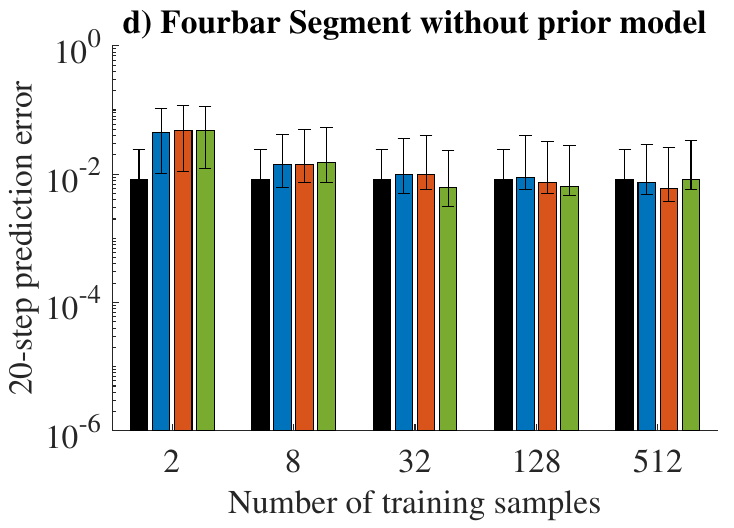}};
    \end{tikzpicture}
    \caption{Median mean-square position error and 10 and 90 percentiles for 20-step predictions of perturbed systems (top, with friction coefficients) trained on varying sample numbers and integrated with different integrators: variational integrator of the nominal dynamics model without learning (black), integrator \eqref{eq:var_int_gp} trained in minimal coordinates (blue), integrator \eqref{eq:var_int_gp} trained in sin/cos coordinates (red), integrator \eqref{eq:var_int_gp_con} trained in maximal coordinates (green). a) Error for a single pendulum with known prior mean dynamics. b) Error for a cartpole with known prior mean dynamics, c) Error for a double pendulum without prior mean dynamics, d) Error for a fourbar segment without prior mean dynamics.}
    \label{fig:exp_meandyn}
\end{figure}
The performance of the proposed integration methods is evaluated by comparing it to integrating a nominal dynamics model and calculating the deviation from the ground-truth simulation of a perturbed system. The ground-truth simulation with a symplectic Euler method uses a time step of $\Delta t=0.1$ms, whereas the other integrators use a time step of $\Delta t=10$ms. As test systems we selected a pendulum, a cartpole, a double pendulum, and a fourbar segment (closed kinematic chain). The systems are shown in Fig. \ref{fig:exp_meandyn}. All links of the systems have a mass $m=1$kg, length $l=1$m, and inertia $J = \frac{1}{12}ml^2$.

For the Gaussian-process-based integrators, three coordinate descriptions are compared: minimal (joint) coordinates, a complex-number representation, i.e., $[\sin(\theta) ~ \cos(\theta)]\hT$ instead of joint angles $\theta$, and maximal coordinates. We perform 20-step ahead predictions in all cases. 

\subsubsection{Prediction with Prior Mean} \label{subsec:varint_residuals}
For the evaluation with a prior mean, i.e., a nominal dynamics model, a single pendulum and cartpole are used. The masses and inertias of the links of the nominal system models are distorted by random uniformly distributed ([0.9,1.1]) perturbation factors, for example $m'=0.9m$. Additionally, viscous friction was added for the joints (see uniformly distributed coefficients $c$ in Fig. \ref{fig:exp_meandyn}). Each observed state is disturbed with additive zero mean Gaussian noise ($\sigma = 10^{-3}$) for training. The results are shown in Fig. \ref{fig:exp_meandyn} a) and b).

As expected, the results in Figure \ref{fig:exp_meandyn} a) and b) indicate that including a prior mean model yields reasonable results for the proposed integrators despite noise and distortion. The uncertainty and error decreases (note the log scale) with additional training samples, highlighting the advantage of including Gaussian process regression for learning residual dynamics compared to just using the nominal model. Minimal and maximal coordinates achieve similar results, justifying the use of the projection for systems with explicit constraints, while the sin/cos parameterization outperforms the other two coordinate descriptions. It appears that no further improvement is achieved once a certain number of training sample are used. One possible explanation for this result could be that the optimization of hyperparameters gets stuck in local minima.

\subsubsection{Prediction without Prior Mean} \label{subsec:varint_learned}
We also compare the performance of the learning-augmented integrator without a prior dynamics model, i.e., without a prior mean, on a double pendulum and a fourbar segment. The Gaussian process regression uses the constant mean from the training data as its mean function, a commonly used strategy in situations without a prior mean. The observation noise and friction as well as the perturbation factors for masses and inertias remain the same as in Sec. \ref{subsec:varint_residuals}. The results are shown in Fig. \ref{fig:exp_meandyn} c) and d).

In Fig. \ref{fig:exp_meandyn}, unlike the predictions with a prior mean, the errors and uncertainties reached are much higher for few training samples compared to integrating the nominal dynamics. As the number of training samples increases, the accuracy improves until it is close to the prediction accuracy of the integrators with a prior nominal model and partially surpasses the pure nominal dynamics. As before, minimal and maximal coordinates result in similar behavior. In the case of the pendulum, the sin/cos parameterization outperforms both. We only evaluate up to 512 samples due to the long computation times for optimizing hyperparameters, which would also be undesirable in an experimental implementation.

\section{Conclusions}\label{sec:conclusions}
We have presented an approach for structure-preserving learning of mechanical systems by combining variational integrators and Gaussian process regression that accurately model and integrate such systems. Known kinematic constraints can be treated with this approach as well. For correct predictions with the Gaussian process regression, the developed method is a variational integrator with corresponding symplectic properties. The evaluation of the proposed methods on a variety of mechanisms in simulation shows satisfactory results both for unconstrained and constrained parameterizations. The data-efficiency of Gaussian process regression and accurate predictions of variational integrators even for large time steps make the proposed method interesting for real-time control applications.

\acks{We thank Armin Lederer, Samuel Tesfazgi, and Petar Bevanda for their help in preparing this manuscript. This work
was supported by the European Union's Horizon 2020 research and innovation programme
under grant agreement no. 871295 "SeaClear".}

\bibliography{l4dc}

\end{document}